\definecolor{airforceblue}{rgb}{0.36, 0.54, 0.66}
\newcommand{\hlt}[1]{{\color{airforceblue} #1}}
\newcommand{\commentT}[1]{{}}
\newcommand{\myparagraph}[1]{\medskip\noindent\textbf{#1}}
\newcommand{\X}{\mathbb{X}}
\newcommand{\R}{\mathbb{R}}
\newcommand{\A}{\mathbb{A}}
\newcommand{\B}{\mathbb{B}}
\newcommand{\E}{\mathbb{E}}
\newcommand{\T}{\mathbb{T}}
\newcommand{\cH}{\mathcal{H}}
\newcommand{\cD}{\mathcal{D}}
\newcommand{\cL}{\mathcal{L}}
\newcommand{\gt}{\theta}
\newcommand{\bgt}{{\bm{\gt}}}
\newcommand{\bu}{\bm{u}}
\newcommand{\bw}{\bm{w}}
\newcommand{\bx}{\bm{x}}
\newcommand{\norm}[1]{\left\lVert#1\right\rVert}
\newcommand{\pd}[2]{\frac{\partial#1}{\partial#2}}
\newcommand{\Et}[1]{\E_{t\in\T}[#1]}
\newcommand{\Ex}[1]{\E_{(\bx,y)\sim\cD}[#1]}
\newcommand{\texp}[1]{\quad\text{\hlt{(#1)}}}
\DeclareMathOperator*{\argmin}{arg\,min}
\newtheorem{lemma}{Lemma}
\newtheorem{proposition}{Proposition}
\newtheorem{example}{Example}
\begin{document}

%%%%%%%%% TITLE
\title{More Is More - Narrowing the Generalization Gap by Adding Classification Heads}

\author{Roee Cates\\
The Hebrew University of Jerusalem\\
{\tt\small roee.cates@mail.huji.ac.il}
\and
Daphna Weinshall\\
The Hebrew University of Jerusalem\\
{\tt\small daphna@cs.huji.ac.il}
}

\date{}

\maketitle

%%%%%%%%% ABSTRACT
\begin{abstract}
Overfit is a fundamental problem in machine learning in general, and in deep learning in particular. In order  to reduce overfit and improve generalization in the classification of images, some employ invariance to a group of transformations, such as rotations and reflections. However, since not all objects exhibit necessarily the same invariance, it seems desirable to allow the network to learn the useful level of invariance from the data. To this end, motivated by self-supervision, we introduce an architecture enhancement for existing neural network models based on input transformations, termed 'TransNet', together with a training algorithm suitable for it. Our model can be employed during training time only and then pruned for prediction, resulting in an equivalent architecture to the base model. Thus pruned, we show that our model improves performance on various data-sets while exhibiting improved generalization, which is achieved in turn by enforcing soft invariance on the convolutional kernels of the last layer in the base model. Theoretical analysis is provided to support the proposed method.
\end{abstract}

%%%%%%%%% BODY TEXT
\section{Introduction}

Deep neural network models currently define the state of the art in many computer vision tasks, as well as speech recognition and other areas. These expressive models are able to model complicated input-output relations. At the same time, models of such large capacity are often prone to overfit, \ie performing significantly better on the training set as compared to the test set. This phenomenon is also called the \emph{generalization gap}.

We propose a method to narrow this generalization gap. Our model, which is called \emph{TransNet}, is defined by a set of input transformations. It augments an existing Convolutional Neural Network (CNN) architecture by allocating a specific head - a fully-connected layer which receives as input the penultimate layer of the base CNN - for each input transformation (see Fig.~\ref{fig:TransNet}). The transformations associated with the model's heads are not restricted apriori.

\begin{figure}[t]
\begin{center}
\includegraphics[width=0.9\linewidth]{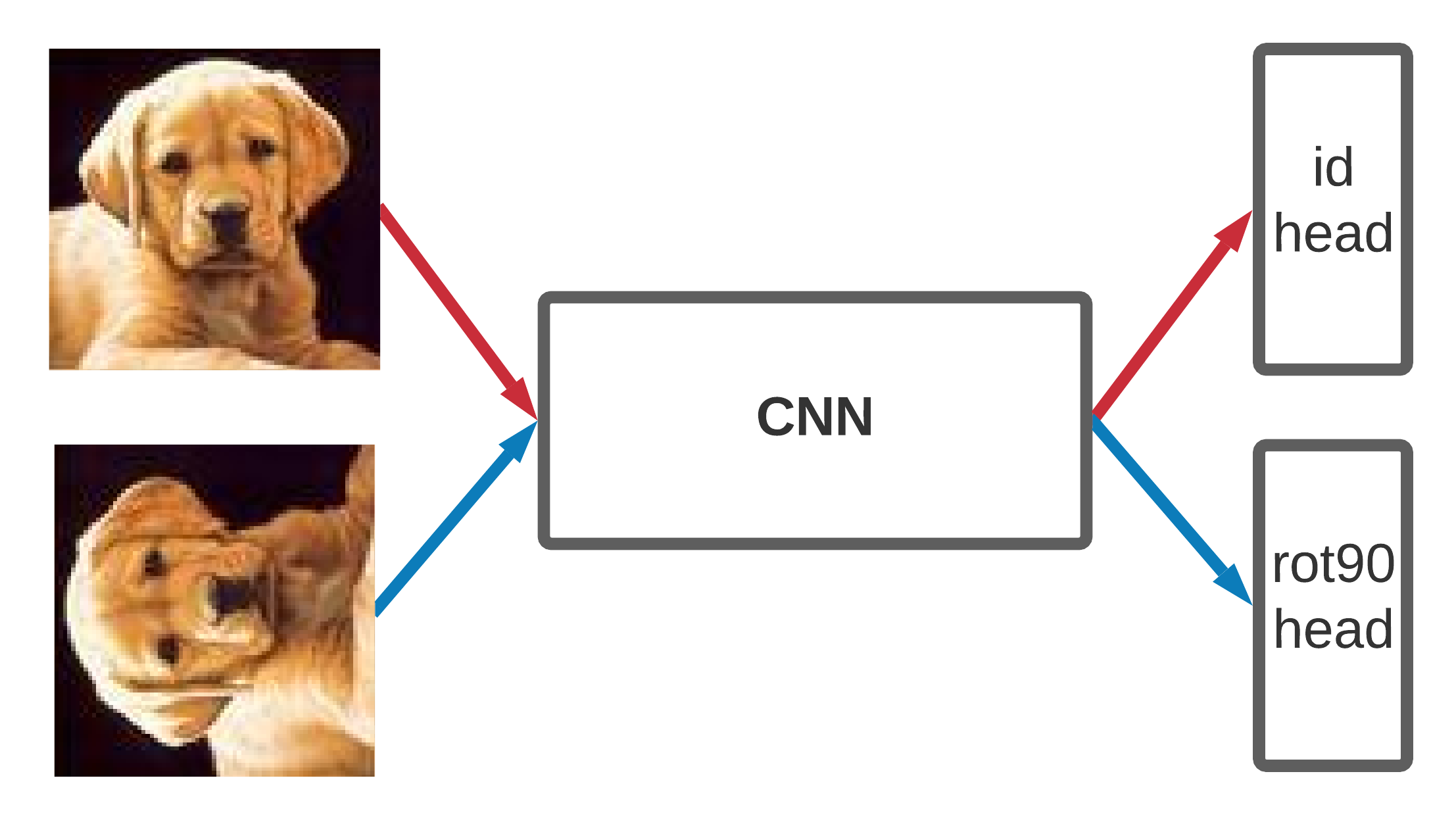}
\end{center}
\vspace{-0.35cm}
\caption{Illustration of the \emph{TransNet} architecture, which consists of 2 heads associated with 2 transformations, the identity and rotation by $90^{\circ}$. Each head classifies images transformed by associated transformation, while both share the same backbone.}
\vspace{-0.35cm}
\label{fig:TransNet}
\end{figure}

The idea behind the proposed architecture is that each head can specialize in a different yet related classification task. We note that any CNN model can be viewed as a special case of the \emph{TransNet} model, consisting of a single head associated with the identity transformation. The overall task is typically harder when training \emph{TransNet}, as compared to the base CNN architecture. Yet by training multiple heads, which share the convolutional backbone, we hope to reduce the model's overfit by providing a form of regularization.

In Section~\ref{sec:TransNet} we define the basic model and the training algorithm designed to train it (see Alg.~\ref{alg:1}). We then discuss the type of transformations that can be useful when learning to classify images. We also discuss the model's variations: (i) pruned version that employs multiple heads during training and then keeps only the head associated with the identity transformation for prediction; (ii) the full version where all heads are used in both training and prediction.

Theoretical investigation of this model is provided in Section~\ref{sec:theo_analysis}, using the dihedral group of transformations ($D_4$) that includes rotations by $90^o$ and reflections. We first prove that under certain mild assumptions, instead of applying each dihedral transformation to the input, one can compile it into the CNN model's weights by applying the inverse transformation to the convolutional kernels. In order to obtain intuition about the inductive bias of the model's training algorithm in complex realistic frameworks, we analyze the model's inductive bias using a simplified framework.

In Section~\ref{sec:exp_results} we describe our empirical results. We first introduce a novel invariance score ($IS$), designed to measure the model's kernel invariance under a given group of transformations. $IS$ effectively measures the inductive bias imposed on the model's weights by the training algorithm.  To achieve a fair comparison, we compare a regular CNN model traditionally trained, to the same model trained like a \emph{TransNet} model as follows: heads are added to the base model, it is trained as a \emph{TransNet} model, and then the extra heads are pruned. We then show that training as \emph{TransNet} improves test accuracy as compared to the base model. This improvement was achieved while keeping the optimized hyper-parameters of the base CNN model, suggesting that further improvement by fine tuning may be possible. We demonstrate the increased invariance of the model's kernels when trained with \emph{TransNet}.

\subsection*{Our Contribution}
\begin{itemize}[itemsep=1pt]
    \item Introduce \emph{TransNet} - a model inspired by self-supervision for supervised learning that imposes partial invariance to a group of transformations.
    \item Introduce an invariance score ($IS$) for CNN convolutional kernels.
    \item Theoretical investigation of the inductive bias implied by the \emph{TransNet} training algorithm.
    \item Demonstrate empirically how both the full and pruned versions of \emph{TransNet} improve accuracy.
\end{itemize}

\section{Related Work}

\myparagraph{Overfit.} A fundamental and long-standing issue in machine learning, overfit occurs when a learning algorithm minimizes the train loss, but generalizes poorly to the unseen test set. Many methods were developed to mitigate this problem, including  \emph{early stopping} - when training is halted as soon as the loss over a validation set starts to increase, and \emph{regularization} - when a penalty term is added to the optimization loss. Other related ideas, which achieve similar goals, include dropout \cite{srivastava2014dropout}, batch normalization \cite{ioffe2015batch}, transfer learning \cite{shao2014transfer, weiss2016survey}, and data augmentation \cite{cubuk2019autoaugment, zhong2020random}. 

\myparagraph{Self-Supervised Learning.}
A family of learning algorithms that train a model using self generated labels (\eg the orientation of an image), in order to exploit unlabeled data as well as extract more information from labeled data. Self training algorithms are used for representation learning, by training a deep network to solve pretext tasks where labels can be produced directly from the data. Such tasks include colorization \cite{zhang2016colorful, larsson2016learning}, placing image patches in the right place \cite{noroozi1603unsupervised, doersch2015unsupervised}, inpainting \cite{pathak2016context} and orientation prediction \cite{gidaris2018unsupervised}. Typically, self-supervision is used in unsupervised learning  \cite{dosovitskiy2015discriminative}, to impose some structure on the data, or in semi-supervised learning \cite{zhai2019s4l,hendrycks2019using}. Our work is motivated by \emph{RotNet}, an orientation prediction method suggested by \cite{gidaris2018unsupervised}. It differs from \cite{zhai2019s4l, hendrycks2019using}, as we allocate a specific classification head for each input transformation rather than predicting the self-supervised label with a separate head.

\myparagraph{Equivariant CNNs.} 
Many computer vision algorithms are designed to exhibit some form of invariance to a transformation of the input, including geometric transformations \cite{mundy1992geometric}, transformations of time \cite{turaga2009locally}, or changes in pose and illumination  \cite{paysan20093d}. Equivariance is a more relaxed property, exploited for example by CNN models when translation is concerned. Work on CNN models that enforces strict equivariance includes \cite{sifre2013rotation,gens2014deep,clark2015training,ngiam2010tiled,cohen2016group,dieleman2016exploiting}. Like these methods, our method seeks to achieve invariance by employing weight sharing of the convolution layers between multiple heads. But unlike these methods, the invariance constraint is soft. Soft equivariance is also seen in works like \cite{dieleman2015rotation}, which employs a convolutional layer that simultaneously feeds rotated and flipped versions of the original image to a CNN model, or \cite{wu2015flip} that appends rotation and reflection versions of each convolutional kernel.

\section{TransNet}
\label{sec:TransNet}

\myparagraph{Notations and definitions}
Let $\X = \{(\bx_i, y_i)\}_{i=1}^n$ denote the training data, where $\bx_i\in\R^d$ denotes the i-th data point and $y_i\in[K]$ its corresponding label. Let $\cD$ denote the data distribution from which the samples are drawn. Let $\cH$ denote the set of hypotheses, where $h_\bgt \in \cH$ is defined by its parameters $\bgt$ (often we use $h=h_\bgt$ to simplify notations). Let $\ell(h,\bx,y)$ denote the loss of hypothesis $h$ when given sample $(\bx,y)$. The overall loss is:
\begin{equation}
    \cL(h,\X) = \Ex{\ell(h,\bx,y)}
\end{equation}
Our objective is to find the optimal hypothesis:
\begin{equation}
    h^* := \argmin_{h\in \cH} \cL(h,\X)
\end{equation}

For simplicity, whenever the underlying distribution of a random variable isn't explicitly defined we use the uniform distribution, \eg $\E_{a\in\A}[a] = 1/|\A|\sum_{i=1}^{|\A|}a$.

\subsection{Model architecture}

The \emph{TransNet} architecture is defined by a set of input transformations $\T=\{t_j\}_{j=1}^m$, where each transformation $t\in\T$ operates on the inputs ($t: \R^d \rightarrow \R^d$) and is associated with a corresponding model's head. Thus each transformation operates on datapoint $\bx$ as $t(\bx)$, and the transformed data-set is defined as:
\begin{equation}
    t(\X) := \{(t(\bx_i), y_i)\}_{i=1}^n
\end{equation}

Given an existing NN model $h$, henceforth called the \emph{base model}, we can split it to two components: all the layers except for the last one denoted $f$, and the last layer $g$ assumed to be a fully-connected layer. Thus $h = g \circ f$. Next, we enhance model $h$ by replacing $g$ with $|\T| = m$ heads, where each head is an independent fully connected layer  $g_t$ associated with a specific transformation $t\in\T$. Formally, each head is defined by $h_t=g_t \circ f$, and it operates on the corresponding transformed input as $h_t(t(\bx))$. 

The full model, with its $m$ heads, is denoted by $h_\T:=\{h_t\}_{t\in\T}$, and operates on the input as follows:
\begin{eqnarray*}
h_\T(\bx):=\E_{t\in\T }[h_t(t(\bx))]
\end{eqnarray*}
The corresponding loss of the full model is defined as:
\begin{equation}
    \cL_\T(h_\T,\X) := \Et{\cL(h_t,t(\X))}
\label{eq:trans_loss}
\end{equation}
Note that the resulting model (see Fig.~\ref{fig:TransNet}) essentially represents $m$ models, which share via $f$ all the weights up to the last fully-connected layer. Each of these models can be used separately, as we do later on.

\subsection{Training algorithm}

Our method uses SGD with a few modifications to minimize the transformation loss (\ref{eq:trans_loss}), as detailed in Alg.~\ref{alg:1}. Relying on the fact that each batch is sampled i.i.d. from $\cD$, we can prove (see Lemma~\ref{lemma:unbiased}) the desirable property that the sampled loss $\cL_\T(h_\T,\B)$ is an unbiased estimator for the transformation loss $\cL_\T(h_\T,\X)$. This justifies the use of Alg.~\ref{alg:1} to optimize the transformation loss.

\IncMargin{1em}
\begin{algorithm}[h]
\SetAlgoLined
\caption{Training the \emph{TransNet} model}
\label{alg:1}
    \SetKwInOut{Input}{input}
    \SetKwInOut{Output}{output}
    
    \Input{\emph{TransNet} model $h_\T$, batch size $b$, maximum iterations num $MAX\_ITER$}
    \Output{trained \emph{TransNet} model}
    \BlankLine
    
    \For{$i=1 \dots MAX\_ITER$}{
    sample a batch $\B=\{(\bx_k,y_k)\}_{k=1}^b \overset{iid}{\sim} \cD^b$\\
    \hlt{forward:} \\
    \For{$t\in\T$}{
        $\cL(h_t,\B)=\frac{1}{b}\sum_{k=1}^b\ell(h_t,t(\bx_k),y_k)$ \nllabel{alg:1:mod}
    }
    $\cL_\T(h_\T,\B)=\frac{1}{m}\sum_{t\in\T}\cL(h_t,\B)$\\
    \hlt{backward (SGD):} \\
    update the model's weights by differentiating the sampled  loss $\cL_\T(h_\T,\B)$
    }
\end{algorithm}
\DecMargin{1em}

%The code, including the \emph{TransNet} model and the training algorithm, can be found in the Suppl. material.

\begin{lemma}
\label{lemma:unbiased}
Given batch $\B$, the sampled transformation loss $\cL_\T(h_\T,\B)$ is an unbiased estimator for the transformation loss $\cL_\T(h_\T,\X)$.
\end{lemma}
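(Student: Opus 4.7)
The plan is to unwind the definitions and then push the expectation over the batch $\B$ through the finite averages using linearity, exploiting that the $b$ batch samples are i.i.d.\ from $\cD$.

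First I would write out the sampled transformation loss explicitly: by the algorithm,
\[
\cL_\T(h_\T,\B) \;=\; \frac{1}{m}\sum_{t\in\T}\frac{1}{b}\sum_{k=1}^{b}\ell(h_t,t(\bx_k),y_k),
\]
so taking the expectation over $\B \overset{iid}{\sim}\cD^b$ and applying linearity twice gives
\[
\E_{\B}\!\left[\cL_\T(h_\T,\B)\right]
= \frac{1}{m}\sum_{t\in\T}\frac{1}{b}\sum_{k=1}^{b}\E_{(\bx_k,y_k)\sim\cD}\!\left[\ell(h_t,t(\bx_k),y_k)\right].
\]

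Next I would use that each $(\bx_k,y_k)$ is identically distributed as $(\bx,y)\sim\cD$, so every inner expectation equals $\E_{(\bx,y)\sim\cD}[\ell(h_t,t(\bx),y)] = \cL(h_t,t(\X))$ by the definition of $\cL$. The inner average over $k$ then collapses, leaving
\[
\E_{\B}\!\left[\cL_\T(h_\T,\B)\right] \;=\; \frac{1}{m}\sum_{t\in\T}\cL(h_t,t(\X)) \;=\; \Et{\cL(h_t,t(\X))} \;=\; \cL_\T(h_\T,\X),
\]
which is exactly the claim.

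There is essentially no obstacle here: the statement is a standard application of linearity of expectation combined with i.i.d.\ sampling, and no property of the transformations $t\in\T$ is needed beyond the fact that they are fixed (independent of the batch). The only mild subtlety worth flagging in the write-up is justifying the interchange of the two finite sums with the expectation, which is immediate since both sums are finite. I would keep the proof to essentially the two displays above.
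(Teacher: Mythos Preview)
Your proposal is correct and essentially mirrors the paper's own proof: both swap the expectation over the batch with the finite average over $\T$ by linearity, then use the i.i.d.\ assumption so that $\E_{\B}[\cL(h_t,t(\B))]=\cL(h_t,t(\X))$. Your version simply makes the inner average over the $b$ samples explicit, while the paper leaves that step implicit, but the argument is the same.
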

\begin{proof}
\begin{equation}
\begin{split}
    \E&_{\B\sim\cD^b}[\cL_\T(h_\T,\B)]\\
    &= \E_{\B\sim\cD^b}[\Et{\cL(h_t,t(\B))}]\\
    &= \Et{\E_{\B\sim\cD^b}[\cL(h_t,t(\B))]} \texp{$\B\overset{iid}{\sim}\cD^b$}\\
    &= \Et{\cL(h_t,t(\X))}\\
    &= \cL_\T(h_\T,\X)
\end{split}
\end{equation}
\end{proof}

\subsection{Transformations}

\myparagraph{Which transformations should we use?}
Given a specific data-set, we distinguish between transformations that occur naturally in the data-set versus such transformations that do not. For example, horizontal flip can naturally occur in the CIFAR-10 data-set, but not in the MNIST data-set. \emph{TransNet} can only benefit from transformations that do not occur naturally in the target data-set, in order for each head to learn a well defined and non-overlapping classification task. Transformations that occur naturally in the data-set are often used for data augmentation, as by definition they do not change the data domain.

\myparagraph{Dihedral group $D_4$.}
As mentioned earlier, the \emph{TransNet} model is defined by a set of input transformations $\T$. We constrain $\T$ to be a subset of the dihedral group $D_4$, which includes reflections and rotations by multiplications of $90^\circ$. We denote a horizontal reflection by $m$ and a counter-clockwise $90^\circ$ rotation by $r$. Using these two elements we can express all the $D_4$ group elements as $\{r^i, m \circ r^i \ | \ i\in {0,1,2,3}\}$. These transformations were chosen because, as mentioned in \cite{gidaris2018unsupervised}, their application is relatively efficient and does not leave artifacts in the image (unlike scaling or change of aspect ratio).

Note that these transformations can be applied to any 3D tensor while operating on the height and width dimensions, including an input image as well as the model's kernels. When applying a transformation $t$ to the model's weights $\bgt$, denoted $t(\bgt)$, the notation implies that $t$ operates on the model's kernels separately, not affecting other layers such as the fully-connected ones (see Fig.~\ref{fig:comp_tr_0}).

\begin{figure}[ht]
\begin{center}
\includegraphics[width=0.9\linewidth]{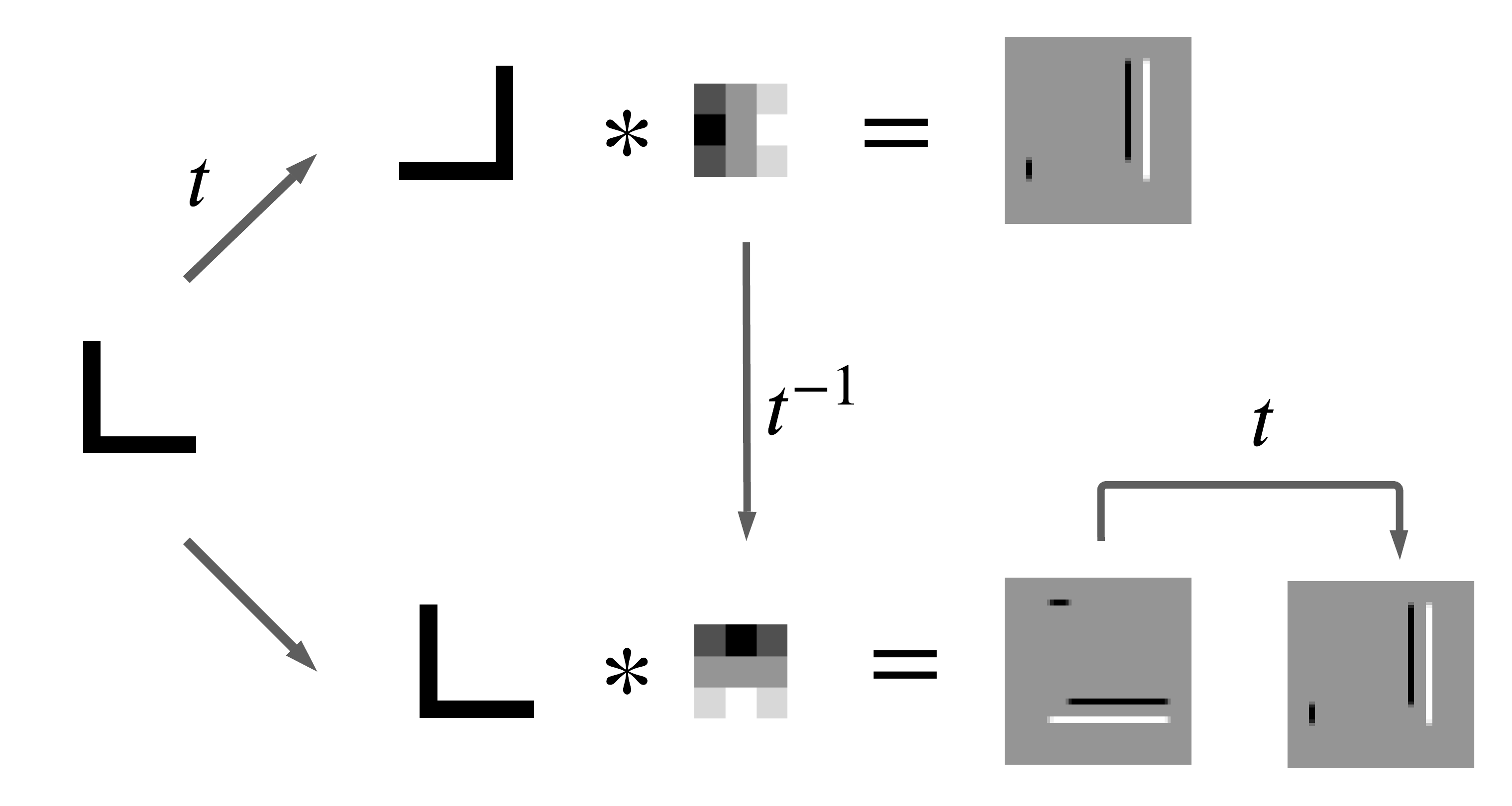}
\end{center}
\vspace{-0.35cm}
\caption{The transformed input convolved with a kernel (upper path) equals to the transformation applied on the output of the input convolved with the inversely transformed kernel (lower path).}
\vspace{-0.35cm}
\label{fig:comp_tr_0}
\end{figure}

\subsection{Model variations}

Once trained, the full \emph{TransNet} model can be viewed as an ensemble of $m$ shared classifiers. Its time complexity is linear with the number of heads, almost equivalent to an ensemble of the base CNN model, since the time needed to apply each one of the $D_4$ transformations to the input is negligible as compared to the time needed for the model to process the input. Differently, the space complexity is almost equivalent to the space complexity of only one base CNN model\footnote{\label{fn:space}Each additional head adds 102K ($\sim$0.45\%) and 513K ($\sim$0.90\%) extra parameters to the basic ResNet18 model when training  CIFAR-100 and ImageNet-200 respectively.}.

We note that one can prune each one of the model's heads, thus leaving a smaller ensemble of up to $m$ classifiers.
A useful reduction prunes all the model's heads except one, typically the one corresponding to the identity transformation, which yields a regular CNN that is equivalent in terms of time and space complexity to the base architecture used to build the \emph{TransNet} model. Having done so, we can evaluate the effect of the \emph{TransNet} architecture's and its training algorithm's inductive bias solely on the training procedure, by comparing the pruned \emph{TransNet} to the base CNN model (see Section~\ref{sec:exp_results}).

\section{Theoretical Analysis} \label{sec:theo_analysis}

In this section we analyze theoretically the \emph{TransNet} model. We consider the following basic CNN architecture:
\begin{equation}
    h_\bgt = g \circ l_{inv} \circ \prod_{i=1}^kc_i
    \label{eq:h_theta}
\end{equation}
where $g$ denotes a fully-connected layer, $l_{inv}$ denotes an invariant layer under the $D_4$ transformations group (\eg a global average pooling layer - GAP), and $\{c_i\}_{i\in[k]}$ denote convolutional layers\footnote{While each convolutional layer may be followed by  ReLU and Batch Normalization \cite{ioffe2015batch} layers, this doesn't change the analysis so we obviate the extra notation.}. The \emph{TransNet} model extends the basic model by appending additional heads:
\begin{equation}
    h_{\T,\bgt} = \{g_t \circ l_{inv} \circ \prod_{i=1}^kc_i\}_{t\in\T}
\end{equation}
We denote the parameters of a fully-connected or a convolutional layer by subscripts of $w$ (weight) and $b$ (bias), \eg $g(\bx)=g_w \cdot \bx + g_b$.

\subsection{Transformation compilation}

Transformations in the dihedral $D_4$ group satisfy another important property, expressed by the following proposition:
\begin{proposition}
\label{prop:comp_tr}
    Let $h_\bgt$ denote a CNN model where the last convolutional layer is followed by an invariant layer under the $D_4$ group. Then any transformation $t\in D_4$ applied to the input image can be compiled into the model's weights $\bgt$ as follows:
    \begin{equation}
        \forall t\in D_4 \quad \forall \bx \in \X: \quad h_\bgt(t(\bx)) = h_{t^{-1}(\bgt)}(\bx)
    \end{equation}
\begin{proof}
    By induction on $k$ we can show that:
    \begin{equation}
        \prod_{i=1}^kc_i\circ t(\bx) = t \circ \prod_{i=1}^kt^{-1}(c_i)(\bx)
        \label{eq:induction}
    \end{equation}
     (see Fig.~\ref{fig:comp_tr_0}). Plugging (\ref{eq:induction}) into (\ref{eq:h_theta}), we get:
    \begin{equation*}
    \begin{split}
        h_\bgt(t(\bx)) &= g \circ l_{inv} \circ \prod_{i=1}^kc_i \circ t(\bx)\\
        &= g \circ l_{inv} \circ t \circ \prod_{i=1}^kt^{-1}(c_i)(\bx)\\
        &= g \circ l_{inv} \circ \prod_{i=1}^kt^{-1}(c_i)(\bx) \texp{$l_{inv}\circ t = l_{inv}$}\\
        &= h_{t^{-1}(\bgt)}(\bx)
    \end{split}
    \end{equation*}
\end{proof}
\end{proposition}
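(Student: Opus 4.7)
The plan is to prove the key commutation identity (\ref{eq:induction}) by induction on $k$, and then use the invariance of $l_{inv}$ under $D_4$ to kill the outer transformation, as the proposition's statement already suggests. The main work is the base case of the induction.

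For the base case ($k=1$), I would establish the single-layer equivariance $c(t(\bx)) = t(t^{-1}(c)(\bx))$ for any convolutional layer $c$ and any $t \in D_4$. The idea is that $D_4$ consists of orthogonal spatial symmetries of the grid (rotations by $90^\circ$ and reflections), so applying $t$ to the input shifts where each pattern appears in a way that exactly matches applying $t^{-1}$ to the kernel and then applying $t$ to the feature map. More concretely, if $c_w$ denotes the kernel and $\bx$ the input, then at spatial location $\bu$, $(c_w * t(\bx))(\bu)$ is a sum over a local patch of $c_w(\bv) \, t(\bx)(\bu + \bv) = c_w(\bv) \, \bx(t^{-1}(\bu + \bv))$. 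Changing variables via $\bv \mapsto t^{-1}(\bv)$ (using that $t$ is a bijection on the relevant index set, treating padding appropriately) turns this into $(t^{-1}(c_w) * \bx)(t^{-1}(\bu))$, which is exactly $t(t^{-1}(c) \circ \bx)(\bu)$. The bias term is unaffected since $t$ acts on spatial coordinates. For the inductive step, assuming (\ref{eq:induction}) holds for $k-1$ layers, I apply $c_k$ to both sides: $c_k \circ \prod_{i=1}^{k-1} c_i \circ t(\bx) = c_k \circ t \circ \prod_{i=1}^{k-1} t^{-1}(c_i)(\bx)$, and invoke the base case on $c_k$ to swap $c_k \circ t$ into $t \circ t^{-1}(c_k)$, yielding the desired form.

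Once (\ref{eq:induction}) is in hand, the rest is essentially bookkeeping, following exactly the computation sketched at the bottom of the statement: substitute (\ref{eq:induction}) into the definition (\ref{eq:h_theta}) of $h_\bgt$, then use the assumed invariance $l_{inv} \circ t = l_{inv}$ (which holds, for instance, when $l_{inv}$ is global average pooling, since a $D_4$ symmetry merely permutes the spatial summands) to remove the leftover outer $t$, and finally identify the result with $h_{t^{-1}(\bgt)}(\bx)$. Here $t^{-1}(\bgt)$ is interpreted as applying $t^{-1}$ to every convolutional kernel while leaving the fully-connected head $g$ and the biases untouched, consistent with the paper's convention introduced around Fig.~\ref{fig:comp_tr_0}.

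The main obstacle is the base case, and specifically handling boundary effects: the change-of-variables argument is cleanest on circular/infinite domains, whereas real CNNs operate on finite feature maps with zero-padding. I would address this by noting that standard zero-padding is itself $D_4$-equivariant (a rotated or reflected zero-padded image equals the zero-padding of the rotated/reflected image, provided the feature map is square and padding is symmetric), so the change of variables goes through verbatim. A side subtlety is that each $c_i$ may be followed by pointwise nonlinearities and batch-norm (as noted in the footnote); these commute with any spatial symmetry, so the induction is unaffected.
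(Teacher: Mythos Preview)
Your proposal is correct and follows essentially the same route as the paper: establish the equivariance identity (\ref{eq:induction}) by induction on the number of convolutional layers, then absorb the outer $t$ via $l_{inv}\circ t = l_{inv}$ and identify the result with $h_{t^{-1}(\bgt)}(\bx)$. The paper's own proof is much terser---it simply asserts the induction and points to Fig.~\ref{fig:comp_tr_0} for the base case---so your explicit change-of-variables argument for the single-layer equivariance, and your remarks on padding and pointwise layers, just fill in details the paper leaves implicit.
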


\myparagraph{Implication.}
The ResNet model \cite{he2016deep} used in our experiments satisfies the pre-condition in the proposition stated above, since it contains a GAP layer \cite{lin2013network} after the last convolutional layer, and GAP is invariant under $D_4$.

\subsection{Single vs. multiple headed model}
\label{sec:single_vs_multi_headed}

In order to acquire intuition regarding the inductive bias implied by training algorithm Alg.~\ref{alg:1}, we consider two cases, a single and a double headed model, trained with the same training algorithm. A single headed model is a special case of the full multi-headed model, where all the heads share weights $h_t(t(\bx))=h(t(\bx))~\forall t$, and the loss in line~\ref{alg:1:mod} of Alg.~\ref{alg:1} becomes $\cL(h,\B)=\frac{1}{b}\sum_{k=1}^b\ell(h,t(\bx_k),y_k)$.

As it's hard to analyze non-convex deep neural networks, we focus on a simplified framework and consider a convex optimization problem where the loss function is convex \wrt the model's parameters $\bgt$. We also assume that the model's transformations in $\T$ form a group\footnote{$\T$ being a group is a technical constraint needed for the analysis, not required by the algorithm.}.

\myparagraph{Single Headed model Analysis.}
In this simplified case, we can prove the following strict proposition:
\begin{proposition}
\label{prop:inv_kernel}
    Let  $h_\bgt$ denote a CNN model satisfying the pre-condition of Prop.~\ref{prop:comp_tr}, and $\T \subset D_4$ a transformations group. Then the optimal transformation loss $\cL_\T$ (see Eq.~\ref{eq:trans_loss}) is obtained by invariant model's weights under the transformations $\T$. Formally:
    \begin{equation*}
        \exists \bgt_0: (\forall t \in \T: \bgt_0=t(\bgt_0)) \wedge (\bgt_0 \in \argmin_\bgt \cL_\T(\bgt,\X))
    \end{equation*}
\end{proposition}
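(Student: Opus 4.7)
The plan is to exhibit an explicit $\bgt_0$ by symmetrizing an arbitrary optimizer of $\cL_\T$ over the group $\T$, then to show (a) that the symmetrized weights are $\T$-invariant by construction and (b) that convexity forces them to remain optimal. Concretely, start from any $\bgt^* \in \argmin_\bgt \cL_\T(\bgt, \X)$ and define
\begin{equation*}
\bgt_0 \;:=\; \frac{1}{|\T|}\sum_{t\in\T} t(\bgt^*).
\end{equation*}
Since each $t\in D_4$ permutes the spatial entries of every convolutional kernel, the map $\bgt \mapsto t(\bgt)$ is linear, so this average is well-defined on the weight space.

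Next I would verify invariance. For any $s\in\T$, linearity gives $s(\bgt_0) = \frac{1}{|\T|}\sum_{t\in\T} (s\circ t)(\bgt^*)$, and because $\T$ is a group, left multiplication by $s$ permutes $\T$, so the sum re-indexes back to $\bgt_0$. This yields $s(\bgt_0)=\bgt_0$ for every $s\in\T$.

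For optimality, the key input is Proposition~\ref{prop:comp_tr}, which rewrites
\begin{equation*}
\cL_\T(\bgt,\X) \;=\; \Et{\cL(h_\bgt, t(\X))} \;=\; \Et{\cL(h_{t^{-1}(\bgt)}, \X)}.
\end{equation*}
Applying this to $t(\bgt^*)$ and using that $\T$ is a group (so both $s\mapsto s^{-1}$ and right-multiplication by $t$ are bijections of $\T$), a change of index gives $\cL_\T(t(\bgt^*),\X) = \cL_\T(\bgt^*,\X)$ for every $t\in\T$. Combined with convexity of $\cL_\T$ in $\bgt$ (which follows from convexity of $\cL$ in $\bgt$ together with the fact that $\bgt\mapsto t(\bgt)$ is linear and $\cL_\T$ is a finite average), Jensen's inequality yields
\begin{equation*}
\cL_\T(\bgt_0,\X) \;\le\; \frac{1}{|\T|}\sum_{t\in\T}\cL_\T(t(\bgt^*),\X) \;=\; \cL_\T(\bgt^*,\X),
\end{equation*}
so $\bgt_0$ is also a minimizer.

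The main subtlety I anticipate is the careful bookkeeping of the group action on the weight space: I need that $t$ acts linearly on $\bgt$ (true because it is a permutation of kernel coordinates that leaves the fully-connected parameters fixed), that $t^{-1}(\bgt)$ in Prop.~\ref{prop:comp_tr} is consistent with the symmetrization, and that the re-indexing $s\mapsto s^{-1}t$ truly exhausts $\T$ — all of which rely on $\T$ being a subgroup of $D_4$ rather than an arbitrary subset, which is exactly the technical hypothesis flagged in the footnote.
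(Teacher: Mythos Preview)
Your proof is correct and uses the same ingredients as the paper's---Prop.~\ref{prop:comp_tr} to move transformations from inputs to weights, the group structure of $\T$ for re-indexing, and Jensen's inequality via convexity---but assembles them slightly differently. You start from an assumed minimizer $\bgt^*$, first establish the symmetry $\cL_\T(t(\bgt^*),\X)=\cL_\T(\bgt^*,\X)$, and then apply Jensen to $\cL_\T$ as a whole; the paper instead works with an arbitrary $\bgt$, applies Jensen at the level of the per-sample loss $\ell$ to obtain $\cL_\T(\bgt,\X)\ge\cL(\bar{\bgt},\X)$, and then uses the invariance of $\bar{\bgt}$ to close the loop via $\cL(\bar{\bgt},\X)=\cL_\T(\bar{\bgt},\X)$. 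Your route is arguably cleaner in isolating the group-invariance of $\cL_\T$ as a standalone lemma; the paper's route yields the marginally stronger statement that symmetrizing \emph{any} $\bgt$ never increases the transformation loss, so no appeal to the existence of a minimizer is needed.
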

\begin{proof}
To simplify the notations, henceforth we let $\bgt$ denote the model $h_\bgt$.
\begin{equation*}
\begin{split}
    \cL_\T&(\bgt,\X)\\
    &= \Et{\cL(\bgt,t(\X))}\\
    &= \Et{\Ex{\ell(\bgt,t(\bx),y)}}\\
    &= \Et{\Ex{\ell(t^{-1}(\bgt),\bx,y)}} \texp{by Prop.~\ref{prop:comp_tr}}\\
    &= \Ex{\Et{\ell(t^{-1}(\bgt),\bx,y)}}\\
    &\geq \Ex{\ell(\Et{t^{-1}(\bgt)},\bx,y)} \texp{Jensen's inequality}\\
    &= \Ex{\ell(\bar{\bgt},\bx,y)} \texp{$\bar{\bgt}:=\Et{t(\bgt))}, \quad \T=\T^{-1}$}\\
    &= \cL(\bar{\bgt},\X)\\
    &= \Et{\cL(t^{-1}(\bar{\bgt}),\X)} \texp{$\bar{\bgt}$ is invariant under $\T$}\\
    &= \Et{\cL(\bar{\bgt},t(\X))} \texp{by Prop.~\ref{prop:comp_tr}}\\
    &= \cL_\T(\bar{\bgt},\X)
\end{split}
\end{equation*}
Above we use the fact that $\bar{\bgt}$ is invariant under $\T$ since $\T$ is a group and thus $t_0\T=\T$, hence:
\begin{equation*}
    t_0(\bar{\bgt}) = t_0(\Et{t(\bgt)}) = \Et{t_0 \circ t(\bgt)} = \Et{t(\bgt)} = \bar{\bgt}
\end{equation*}
\end{proof}

\myparagraph{Double headed model.}
In light of Prop.~\ref{prop:inv_kernel} we now present a counter example, which shows that Prop.~\ref{prop:inv_kernel} isn't true for the general \emph{TransNet} model.
\begin{example}
    Let $\T=\{t_1=r^0, t_2=m\circ r^2\} \subset D_4$ denote the transformations group consisting of the identity and the vertical reflection transformations. Let $h_{\T,\bgt} = \{h_i=g_i \circ GAP \circ c\}_{i=1}^2$ denote a double headed \emph{TransNet} model, which comprises a single convolutional layer (1 channel in and 2 channels out), followed by a GAP layer and then 2 fully-connected layers $\{g_i\}_{i=1}^2$, one for each head. Each $g_i$ outputs a vector of size 2. The data-set $\X=\{(\bx_1,y_1),(\bx_2,y_2)\}$ consists of 2 examples:
    \begin{equation*}
    \bx_1=\begin{bmatrix}
    1 & 1 & 1\\
    0 & 0 & 0\\
    0 & 0 & 0
    \end{bmatrix}, y_1 = 1, \quad
    \bx_2=\begin{bmatrix}
    0 & 0 & 0\\
    0 & 0 & 0\\
    1 & 1 & 1
    \end{bmatrix}, y_2 = 2
   \end{equation*}
    Note that $\bx_2=t_2(\bx_1)$\footnote{This example may seem rather artificial, but in fact this isn't such a rare case. \Eg, the airplane and the ship classes, both found in the CIFAR-10 data-set, that share similar blue background.}.
    
    Now, assume the model's convolutional layer $c$ is composed of 2 invariant kernels under $\T$, and denote it by $c_{inv}$. Let $i\in{1,2}$, then:
    \begin{equation}
    \begin{split}
        h_i(\bx_2) &= h_i(t_2(\bx_1)) = g_i \circ GAP \circ c_{inv} \circ t_2 (\bx_1)\\ 
        &= g_i \circ GAP \circ c_{inv} (\bx_1) = h_i(\bx_1)
    \end{split}
    \end{equation}
    In this case both heads predict the same output for both inputs with different labels, thus:
    \begin{equation*}
    \cL(h_i,t_i(\X))>0 \implies \cL_\T(h_{\T,\bgt},\X)>0
    \end{equation*}
    In contrast, by setting $c_w=(\bx_1,\bx_2), c_b=(0,0)$, which isn't invariant under $\T$, as well as:
    \begin{equation*}
    g_{1,w}=\begin{bmatrix}1&0\\0&1\end{bmatrix}, g_{1,b}=\begin{bmatrix}0\\0\end{bmatrix} \quad
    g_{2,w}=\begin{bmatrix}0&1\\1&0\end{bmatrix}, g_{2,b}=\begin{bmatrix}0\\0\end{bmatrix},
    \end{equation*}
    we obtain:
    \begin{equation*}
    \cL(h_i,t_i(\X))=0 \implies \cL_\T(h_{\T,\bgt},\X)=0.
    \end{equation*}
    We may conclude that the optimal model's kernels aren't invariant under $\T$, as opposed to the claim of Prop.~\ref{prop:inv_kernel}.
\end{example}

\myparagraph{Discussion.}
The intuition we derive from the analysis above is that the training algorithm (Alg.~\ref{alg:1}) implies an invariant inductive bias on the model's kernels as proved in the single headed model, while not strictly enforcing invariance as shown by the counter example of the double headed model.

\section{Experimental Results} 
\label{sec:exp_results}

\myparagraph{data-sets.}
For evaluation we used the 5 image classification data-sets detailed in Table~\ref{table:data-sets}. These diverse data-sets allow us to evaluate our method across different image resolutions and number of predicted classes.

\begin{table}[h]
\begin{center}
\begin{tabular}{lccc}
\hline
{\bf Name} & {\bf Classes} & {\bf Train/Test} & {\bf dim}\\
 & & {\bf Samples} &\\
\hline 
CIFAR-10 \cite{krizhevsky2009learning}  & 10    & 50K/10K   & 32 \\	
CIFAR-100 \cite{krizhevsky2009learning} & 100   & 50K/10K   & 32 \\	
ImageNette \cite{imagewang}             & 10    & 10K/4K    & 224 \\	
ImageWoof \cite{imagewang}              & 10    & 10K/4K    & 224 \\	
ImageNet-200                            & 200   & 260K/10K  & 224 \\	
\hline
\end{tabular}
\caption{The data-sets used in our experiments. The dimension of each example, a color image, is \textbf{dim}$\times$\textbf{dim}$\times 3$ pixels. ImageNette represents 10 easy to classify classes from ImageNet \cite{deng2009ImageNet}, while ImageWoof represents 10 hard to classify classes of dog breeds from ImageNet. ImageNet-200 represents 200 classes from ImageNet (same classes as in \cite{le2015tiny}) of full size images.}
\label{table:data-sets}
\end{center}
\vspace{-0.5cm}
\end{table}

\begin{table*}[th]
\begin{center}
\begin{tabular}{lccccc}
{\bf MODEL} & {\bf CIFAR-10} & {\bf CIFAR-100} & {\bf ImageNette} & {\bf ImageWoof} & {\bf ImageNet-200} \\ \hline
base-CNN     & 95.57 $\pm$ 0.08 & 76.56 $\pm$ 0.16 & 92.97 $\pm$ 0.16 & 87.27 $\pm$ 0.15 & 84.39 $\pm$ 0.07 \\ \hline
PT2-CNN & {\bf 95.99} $\pm$ 0.07 & {\bf 79.33} $\pm$ 0.15 & 93.84 $\pm$ 0.14 & {\bf 88.09} $\pm$ 0.30 & {\bf 85.17} $\pm$ 0.10 \\ \hline
PT3-CNN & 95.87 $\pm$ 0.04 & 79.08 $\pm$ 0.06 & {\bf 94.15} $\pm$ 0.16 & 87.79 $\pm$ 0.11 & 84.97 $\pm$ 0.95 \\ \hline
PT4-CNN & 95.73 $\pm$ 0.05 & 77.98 $\pm$ 0.17 & 93.94 $\pm$ 0.06 & 85.81 $\pm$ 0.79 & 84.02 $\pm$ 0.71 \\ \hline
\end{tabular}
\vspace{-0.5cm}
\end{center}
\caption{Accuracy of models with the same space and time complexity, comparing the Base CNN with pruned \emph{TransNet} models "PT$m$-CNN", where $m=2,3,4$ denotes the number of heads in training. Mean and standard error for 3 repetitions are shown.}
\label{table:acc_same_complexity}
\end{table*}

\begin{table*}[th]
\begin{center}
\begin{tabular}{lccccc}
\multicolumn{1}{l}{\bf MODEL} & {\bf CIFAR-10} & {\bf CIFAR-100} & {\bf ImageNette} & {\bf ImageWoof} & {\bf ImageNet-200} \\ \hline 
base-CNN     & 95.57 $\pm$ 0.08 & 76.56 $\pm$ 0.16 & 92.97 $\pm$ 0.16 & 87.27 $\pm$ 0.15 & 84.39 $\pm$ 0.07 \\ \hline
T2-CNN  & 96.22 $\pm$ 0.10 & 80.35 $\pm$ 0.06 & 94.02 $\pm$ 0.13 & 88.36 $\pm$ 0.33 & 85.47 $\pm$ 0.14 \\ \hline
T3-CNN  & {\bf 96.33} $\pm$ 0.06 & {\bf 80.92} $\pm$ 0.08 & 94.39 $\pm$ 0.07 & {\bf 88.79} $\pm$ 0.25 & {\bf 85.68} $\pm$ 0.20 \\ \hline
T4-CNN  & 96.17 $\pm$ 0.01 & 79.94 $\pm$ 0.16 & {\bf 94.67} $\pm$ 0.06 & 87.05 $\pm$ 0.75 & 85.54 $\pm$ 0.11 \\ \hline
\end{tabular}
\vspace{-0.5cm}
\end{center}
\caption{Accuracy of models with similar space complexity and different time complexity, comparing the Base CNN with full \emph{TransNet} models. With $m$ denoting the number of heads, chosen to be 2,3 or 4, the prediction time complexity of the respective \emph{TransNet} model "T$m$-CNN" is $m$ times larger than the base CNN. Mean and standard error for 3 repetitions are shown. }
\label{table:acc_diff_time_complexity}
\end{table*}

\myparagraph{Implementation Details.}
We employed the ResNet18 \cite{he2016deep} architecture for all the data-sets except for ImageNet-200, which was evaluated using the ResNet50 architecture  (see Appendix~\ref{app:imp_details} for more implementation details).

\myparagraph{Notations.}
\begin{itemize}[itemsep=1pt]
    \item "base CNN" - a regular convolutional neural network, identical to the \emph{TransNet} model with only the head corresponding to the identity transformation. 
    \item "PT$m$-CNN" - a pruned \emph{TransNet} model trained with $m$ heads, where a single head is left and used for prediction\footnote{In our experiments we chose the head associated with the identity ($r^0$) transformation when evaluating a pruned \emph{TransNet}. Note, however, that we could have chosen the best head in terms of accuracy, as it follows from Prop.~\ref{prop:comp_tr} that its transformation can be compiled into the model's weights.}. It has the same space and time complexity as the base CNN.
    \item "T$m$-CNN" - a full \emph{TransNet} model trained with $m$ heads, where all are used for prediction. It has roughly the same space complexity\textsuperscript{\ref{fn:space}} and $m$ times the time complexity as compared to the base CNN.
\end{itemize}
To denote an ensemble of the models above, we add a suffix of a number in parentheses, \eg T2-CNN (3) is an ensemble of 3 T2-CNN models.

\subsection{Models accuracy, comparative results}
\label{sec:accuracy}

We now compare the accuracy of the "base-CNN", "PT$m$-CNN" and "T$m$-CNN" models, where $m=2,3,4$ denotes the number of heads of the \emph{TransNet} model, and their ensembles, across all the data-sets listed in Table~\ref{table:data-sets}.

\myparagraph{Models with the same space and time complexity.}
First, we evaluate the pruned \emph{TransNet} model by comparing the "PT$m$-CNN" models with the "base-CNN" model, see Table~\ref{table:acc_same_complexity}. Essentially, we evaluate the effect of using the \emph{TransNet} model only for training, as the final "PT$m$-CNN" models are identical to the "base-CNN" model regardless of $m$. We can clearly see the inductive bias implied by the training procedure. We also see that \emph{TransNet} training improves the accuracy of the final "base-CNN" classifier across all the evaluated data-sets.

\myparagraph{Models with similar space complexity, different time complexity.}
Next, we evaluate the full \emph{TransNet} model by comparing the "T$m$-CNN" models with the "base-CNN" model, see Table~\ref{table:acc_diff_time_complexity}. Despite the fact that the full \emph{TransNet} model processes the (transformed) input $m$ times more as compared to the "base-CNN" model, its architecture is not significantly larger than the base-CNN's. The full \emph{TransNet} adds to the "base-CNN" a negligible number of parameters, in the form of its multiple heads\textsuperscript{\ref{fn:space}}. Clearly the full \emph{TransNet} model improves the accuracy as compared to the "base-CNN" model, and also as compared to the pruned \emph{TransNet} model. Thus, if the additional runtime complexity during test is not an issue, it is beneficial to employ the full \emph{TransNet} model during test time. In fact, one can process the input image once, and then choose whether to continue processing it with the other heads to improve the prediction, all this while keeping roughly the same space complexity.

\myparagraph{Ensembles: models with similar time complexity, different space complexity.}
Here we evaluate ensembles of pruned \emph{TransNet} models, and compare them to a single full \emph{TransNet} model that can be seen as a space-efficient ensemble: full \emph{TransNet} generates $m$ predictions with only $~1/m$ parameters, where $m$ is the number of \emph{TransNet} heads. Results are shown in Fig.~\ref{fig:ens_acc}. Clearly an ensemble of pruned \emph{TransNet} models is superior to an ensemble of base CNN models, suggesting that the accuracy gain achieved by the pruned \emph{TransNet} model doesn't overlap with the accuracy gain achieved by using an ensemble of classifiers. Furthermore, we observe that the full \emph{TransNet} model exhibits competitive accuracy results, with 2 and 3 heads, as compared to an ensemble of 2 or 3 base CNN models respectively. This is achieved while utilizing $~1/2$ and $~1/3$ as many parameters respectively.

\begin{figure}[h]
\begin{center}
\includegraphics[width=0.9\linewidth]{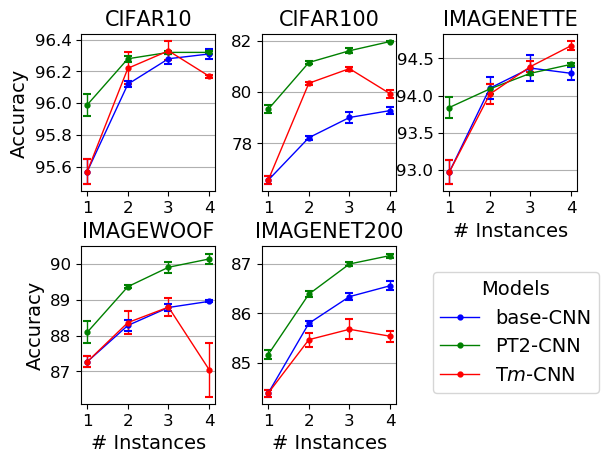}
\end{center}
\vspace{-0.35cm}
\caption{Model accuracy as a function of the number of instances ($X$-axis) processed during prediction. Each instance requires a complete run from input to output. An ensemble includes: $m$ independent base CNN classifiers for "CNN"; $m$ pruned \emph{TransNet} trained with 2 heads for "PT2-CNN"; and one \emph{TransNet} model with $m$ heads, where $m$ is the ensemble size, for "T$m$-CNN".}
\label{fig:ens_acc}
\end{figure}

\myparagraph{Accuracy vs. generalization.}
In Fig.~\ref{fig:ens_acc} we can see that 2 heads improve the model's performance across all data-sets, 3 heads improve it on most of the data-sets, and 4 heads actually reduce performance on most data-sets. We hypothesize that too many heads impose too strict an inductive bias on the model's kernels. Thus, although generalization is improved, test accuracy is reduced due to insufficient variance. Further analysis is presented in the next section.

\subsection{Generalization}
\label{sec:generalization}

We've seen in Section~\ref{sec:accuracy} that the \emph{TransNet} model, whether full or pruned, achieves better test accuracy as compared to the base CNN model. This occurs despite the fact that the transformation loss $\cL_\T(h_\T,\X)$ minimized by the \emph{TransNet} model is more demanding than the loss $\cL(h,\X)$ minimized by the base CNN, and appears harder to optimize. This conjecture is justified by the following Lemma:
\begin{lemma}
    Let $h_\T$ denote a \emph{TransNet} model that obtains transformation loss of $a:=\cL_\T(h_\T,\X)$. Then there exists a reduction from $h_\T$ to the base CNN model $h$ that obtains a loss of at most $a$, \ie $\cL(h,\X) \leq a$.
\end{lemma}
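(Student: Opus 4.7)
The plan is to leverage Proposition~\ref{prop:comp_tr} (transformation compilation) together with a standard averaging argument over $t \in \T$. The key observation is that each head $h_t = g_t \circ f$ of the \emph{TransNet} is itself a valid base CNN (it has the shared backbone followed by a single fully-connected layer), and, via Prop.~\ref{prop:comp_tr}, the composition ``apply $t$ to the input, then run $h_t$'' is equivalent to running a base CNN whose convolutional kernels have been transformed by $t^{-1}$. Thus every summand that contributes to the transformation loss already \emph{is} the loss of some base CNN on the untransformed data.

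First I would expand the hypothesis using the definition of $\cL_\T$:
$$a = \cL_\T(h_\T, \X) = \Et{\cL(h_t, t(\X))} = \Et{\Ex{\ell(h_t, t(\bx), y)}}.$$
Next I would apply Prop.~\ref{prop:comp_tr} to each head $h_t$ with parameters $\bgt_t$: since the pre-condition (invariant layer after the last convolutional layer) is inherited from the base architecture, we get $\ell(h_t, t(\bx), y) = \ell(h_{t^{-1}(\bgt_t)}, \bx, y)$, so $\cL(h_t, t(\X)) = \cL(h_{t^{-1}(\bgt_t)}, \X)$. Note that $h_{t^{-1}(\bgt_t)}$ is a legitimate base CNN model: it is $g_t \circ \prod_i t^{-1}(c_i) \circ \ldots$ and differs from a standard base CNN only in the numerical values of its kernels.

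Then I would invoke the averaging argument. Substituting the above into the expansion of $a$ gives
$$a = \Et{\cL(h_{t^{-1}(\bgt_t)}, \X)},$$
so by the pigeonhole principle there exists $t^* \in \T$ with $\cL(h_{(t^*)^{-1}(\bgt_{t^*})}, \X) \le a$. Defining the reduced model as $h := h_{(t^*)^{-1}(\bgt_{t^*})}$ completes the proof: $h$ is a base CNN that achieves loss at most $a$ on $\X$.

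There is no real obstacle here; the proof is a two-line application of Prop.~\ref{prop:comp_tr} plus averaging. The only point requiring mild care is arguing that the reduced hypothesis $h_{(t^*)^{-1}(\bgt_{t^*})}$ indeed lies in the base-CNN hypothesis class (as opposed to merely being a function); this is immediate because transforming each kernel by an element of $D_4$ leaves the layer shapes and connectivity unchanged, so only the parameter values shift.
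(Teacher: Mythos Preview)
Your proof is correct and follows essentially the same approach as the paper: expand the transformation loss as an average over $t\in\T$, invoke Prop.~\ref{prop:comp_tr} to compile each transformation into the weights, and use averaging to pick a head with loss at most $a$. The only cosmetic difference is the order---the paper first selects $t$ with $\cL(h_{\bgt_t},t(\X))\le a$ and then compiles, whereas you compile first and then select---but the two are equivalent.
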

\begin{proof}
    $a=\cL_\T(h_\T,\X)=\Et{\cL(h_{\bgt_t},t(\X))}$, so there must be a transformation $t\in\T$ s.t. $\cL(h_{\bgt_t},t(\X))\leq a$. Now, one can compile the transformation $t$ into $h_{\bgt_t}$ (see Prop.~\ref{prop:comp_tr}) and get a base CNN: $\tilde{h}=h_{t^{-1}(\bgt_t)}$ which obtains $\cL(\tilde{h},\X) = \cL(h_{t^{-1}(\bgt_t)}, t(\X)) = \cL(h_{\bgt_t},t(\X)) \leq a$.
\end{proof}

Why is it, then, that the \emph{TransNet} model achieves overall better accuracy than the base CNN? The answer lies in its ability to achieve a better generalization. 

In order to measure the generalization capability of a model \wrt a data-set, we use the ratio between the test-set and train-set loss, where a lower ratio indicates better generalization. As illustrated in Fig.~\ref{fig:CIFAR100_gen}, clearly the pruned \emph{TransNet} models exhibit better generalization when compared to the base CNN model. Furthermore, the generalization improvement increases with the number of \emph{TransNet} model heads, which are only used for training and then pruned. The observed narrowing of the generalization gap occurs because, although the \emph{TransNet} model slightly increases the training loss, it more significantly decreases the test loss as compared to the base CNN.

\begin{figure}[h]
\begin{center}
\includegraphics[width=1.0\linewidth]{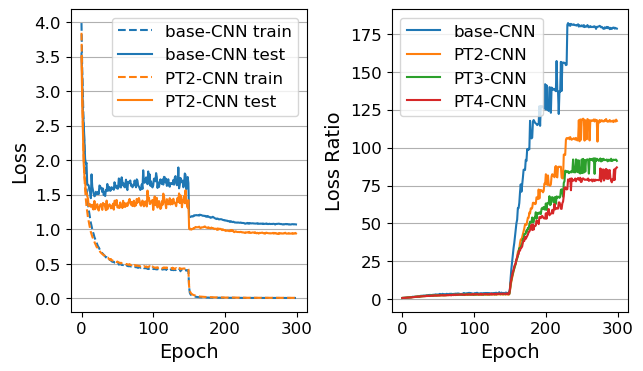}
\end{center}
\vspace{-0.35cm}
\caption{CIFAR-100 results. Left panel: learning curve of the Base CNN model ("base-CNN") and a pruned \emph{TransNet} model ("PT2-CNN"). Right panel: generalization score, test-train loss ratio, measured for the base-CNN model and various pruned \emph{TransNet} models with a different number of heads.}
\label{fig:CIFAR100_gen}
\end{figure}

We note that better generalization does not necessarily imply a better model. The "PT4-CNN" model generalizes better than any other model (see right panel of Fig.~\ref{fig:CIFAR100_gen}), but its test accuracy is lower as seen in  Table~\ref{table:acc_same_complexity}.

\subsection{Kernel invariance}

What characterizes the beneficial inductive bias implied by the \emph{TransNet} model and its training algorithm Alg.~\ref{alg:1}?. To answer this question, we investigate the emerging invariance of kernels in the convolutional layers of the learned network, \wrt the \emph{TransNet}  transformations set $\T$.

We start by introducing the "Invariance Score" ($IS$), which measures how invariant a 3D tensor is \wrt a transformations group. Specifically, given a convolutional kernel denoted by $\bw$ (3D tensor) and a set of transformations group $\T$, the $IS$ score is defined as follows:
\begin{equation}
    IS(\bw,\T) := \min_{\bu \in INV_\T}\norm{\bw-\bu}
\end{equation}
where $INV_\T$ is the set of invariant kernels (same shape as $\bw$) under $\T$, \ie 
$INV_\T:=\{\bu: \bu = t(\bu) \ \forall t\in\T \}$.

\begin{lemma}
\label{lemma:inv_kernel}
    $\argmin_{\bu\in INV_\T}\norm{\bw-\bu} = \Et{t(\bw)}$
\end{lemma}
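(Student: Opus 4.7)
The plan is to recognize this as an orthogonal projection problem: $INV_\T$ is a linear subspace of the tensor space (it is the fixed set of a group of linear operators, hence the intersection of kernels of $t - \mathrm{Id}$), and I expect $\bar{\bw} := \Et{t(\bw)}$ to be the orthogonal projection of $\bw$ onto this subspace with respect to the Frobenius inner product.

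First I would establish two structural facts. (i) $\bar{\bw} \in INV_\T$: for any $t_0\in\T$, linearity of $t_0$ gives $t_0(\bar{\bw}) = \Et{t_0\circ t(\bw)} = \Et{t(\bw)} = \bar{\bw}$, since $t_0\T = \T$ as $\T$ is a group. This is exactly the trick used at the end of the proof of Prop.~\ref{prop:inv_kernel}. (ii) Each $t\in D_4$ acts on a 3D tensor by permuting its entries in the spatial plane, hence it is an isometry of the Frobenius inner product: $\langle t(\bw), t(\bu)\rangle = \langle \bw, \bu\rangle$ for all tensors $\bw,\bu$ of matching shape.

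Next I would use these facts to evaluate $\langle \bw, \bu\rangle$ for an arbitrary $\bu \in INV_\T$. Because $\bu = t(\bu)$ for every $t\in\T$, and because $t$ is an isometry with inverse $t^{-1}\in\T$ (as $\T$ is a group),
\begin{equation*}
    \langle \bw,\bu\rangle = \Et{\langle \bw, t(\bu)\rangle} = \Et{\langle t^{-1}(\bw), \bu\rangle} = \Big\langle \Et{t^{-1}(\bw)},\, \bu\Big\rangle = \langle \bar{\bw}, \bu\rangle,
\end{equation*}
where the last equality uses $\T = \T^{-1}$. Completing the square then gives
\begin{equation*}
    \norm{\bw-\bu}^2 = \norm{\bw}^2 - 2\langle \bar{\bw},\bu\rangle + \norm{\bu}^2 = \norm{\bw}^2 - \norm{\bar{\bw}}^2 + \norm{\bar{\bw}-\bu}^2,
\end{equation*}
so the right-hand side is minimized over $\bu\in INV_\T$ precisely when $\bu = \bar{\bw}$, which is admissible by (i). Uniqueness of the argmin follows from strict convexity of $\norm{\bar{\bw}-\bu}^2$.

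The main obstacle I anticipate is purely a matter of being careful about what "isometry" means here: I need the $D_4$ action on a 3D tensor (channels $\times$ height $\times$ width) to preserve the entrywise inner product. Since the paper specifies that the transformation acts only on the height/width dimensions by a rigid permutation of pixel positions, this is immediate, but it is the one place where the argument would break if applied to a group that did not act by orthogonal maps on the weight space. Everything else is a standard projection-onto-a-fixed-subspace calculation.
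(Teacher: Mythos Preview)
Your proof is correct and takes a somewhat different route from the paper's. The paper rewrites the squared distance using invariance of $\bu$ and the isometry property to get $\norm{\bw-\bu}^2=\Et{\norm{t(\bw)-\bu}^2}$, then differentiates this coordinate-wise in $\bu$ to find the unconstrained minimizer $\bu=\Et{t(\bw)}$ (implicitly relying on this point lying in $INV_\T$). You instead frame the problem as an orthogonal projection: you verify explicitly that $\bar\bw=\Et{t(\bw)}\in INV_\T$, use isometry and $\T=\T^{-1}$ to show $\langle\bw,\bu\rangle=\langle\bar\bw,\bu\rangle$ for every $\bu\in INV_\T$, and complete the square to obtain the Pythagorean identity $\norm{\bw-\bu}^2=\norm{\bw}^2-\norm{\bar\bw}^2+\norm{\bar\bw-\bu}^2$. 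Your argument is coordinate-free and makes the geometry (orthogonal projection onto the fixed subspace via the Reynolds operator) transparent, and it also cleanly delivers uniqueness; the paper's calculus argument is shorter but leaves the membership $\bar\bw\in INV_\T$ and the constrained-vs-unconstrained minimization issue implicit. Both rely on the same two ingredients you correctly isolate: that $\T$ is a group (so $t_0\T=\T$ and $\T=\T^{-1}$) and that each $t\in D_4$ acts as an isometry on the tensor space.
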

\begin{proof}
    Let $\bu$ be an invariant tensor under $\T$.
    Define $f(\bu):= \norm{\bw-\bu}^2$. Note that $\argmin_{u\in INV_\T}\norm{\bw-\bu} = \argmin_{u\in INV_\T}f(u)$.
    
\begin{equation*}
\begin{split}
    f(\bu) &= \norm{\bw-\bu}^2\\
    &= \Et{\norm{\bw-t(\bu)}^2} \texp{$\bu$ is invariant under $\T$}\\
    &= \Et{\norm{t^{-1}(\bw)-\bu}^2}\\
    &= \Et{\norm{t(\bw)-\bu}^2} \texp{$\T=\T^{-1}$}\\
    &= \Et{\sum_{i=1}^{size(\bw)}(t(\bw)_i-\bu_i)^2}
\end{split}
\end{equation*}
Where index $i$ runs over all the tensors' elements. Finally, we differentiate $f$ to obtain its minimum:
\begin{equation*}
\begin{split}
    &\pd{f}{\bu_i} = \Et{-2(t(\bw)_i-\bu_i)} = 0\\
    &\implies \bu_i = \Et{[t(\bw)_i} \implies \bu = \Et{t(\bw)}\qedhere
\end{split}
\end{equation*}
\end{proof}
\noindent
Lemma~\ref{lemma:inv_kernel} gives a closed-form expression for the $IS$ gauge:
\begin{equation}
    IS(\bw,\T) = \norm{\bw-\Et{t(\bw)}}
\end{equation}

Equipped with this gauge, we can inspect the invariance level of the model's kernels \wrt a transformations group. Note that this measure allows us to compare the full \emph{TransNet} model with the base CNN model, as both share the same convolution layers. Since the transformations of the \emph{TransNet} model don't necessarily form a group, we use the minimal group containing these transformations - the group of all rotations  $\{r^i\}_{i=1}^4$.

In Fig.~\ref{fig:CIFAR100_IS_layers_graph} we can see that the full \emph{TransNet} model "T2-CNN" and the base CNN model demonstrate similar invariance level in all the convolutional layers but the last one. In Fig.~\ref{fig:CIFAR100_IS_layer17}, where the distribution of the $IS$ score over the last layer of 4 different models is fully shown, we can more clearly see that the last convolutional layer of full \emph{TransNet} models exhibits much higher invariance level as compared to the base CNN. This phenomenon is robust to the metric used in the $IS$ definition, with similar results when using "Pearson Correlation" or "Cosine Similarity". The increased invariance in the last convolutional layer is monotonically increasing with the number of heads in the \emph{TransNet} model, which is consistent with the generalization capability of these models (see Fig~\ref{fig:CIFAR100_gen}).

\begin{figure}[h]
\begin{center}
\includegraphics[width=0.9\linewidth]{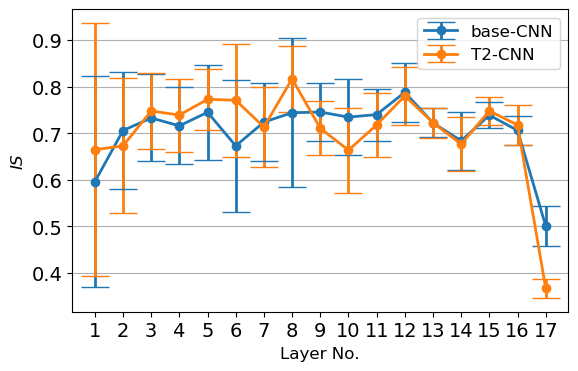}
\end{center}
\vspace{-0.35cm}
\caption{CIFAR-100 results, plotting the distribution of the $IS$ scores (mean and std) for the kernels in each layer of the different models. Invariance is measured \wrt the group of $90^\circ$ rotations.}
\label{fig:CIFAR100_IS_layers_graph}
\end{figure}

\begin{figure}[h]
\begin{center}
\includegraphics[width=0.9\linewidth]{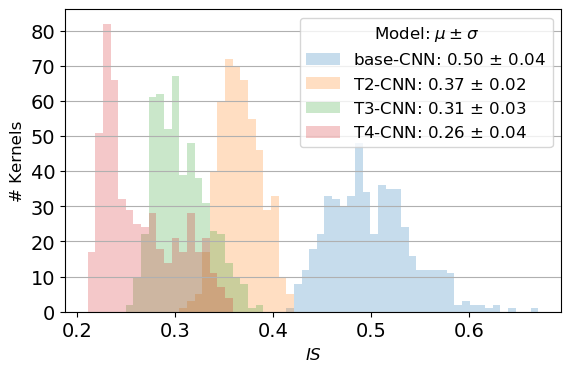}
\end{center}
\vspace{-0.35cm}
\caption{CIFAR-100 results, plotting the full distribution of the $IS$ scores for the kernels in the last (17-th) layer of the different models. Invariance is measured \wrt the group of $90^\circ$ rotations.}
\vspace{-0.35cm}
\label{fig:CIFAR100_IS_layer17}
\end{figure}

The generalization improvement achieved by the \emph{TransNet} model, as reported in Section~\ref{sec:generalization}, may be explained by this increased level of invariance, as highly invariant kernels have fewer degrees of freedom, and should therefore be less prone to overfit.
 
\subsection{Ablation Study}

\begin{table*}[t]
\begin{center}
\begin{tabular}{lccccc}
{\bf MODEL} & {\bf CIFAR-10} & {\bf CIFAR-100} & {\bf ImageNette} & {\bf ImageWoof} & {\bf ImageNet-200} \\ \hline
base-CNN    & 95.57 $\pm$ 0.08 & 76.56 $\pm$ 0.16 & 92.97 $\pm$ 0.16 & 87.27 $\pm$ 0.15 & 84.39 $\pm$ 0.07 \\ \hline
Alg. only   & 93.85 $\pm$ 0.63 & 76.64 $\pm$ 0.69 & 92.60 $\pm$ 0.07 & 87.64 $\pm$ 0.30 & 80.58 $\pm$ 0.08 \\ \hline
Arch. only  & 95.68 $\pm$ 0.05 & 76.98 $\pm$ 0.13 & 93.49 $\pm$ 0.03 & 87.40 $\pm$ 0.74 & 84.47 $\pm$ 0.13 \\ \hline
PT2-CNN     & {\bf 95.99} $\pm$ 0.07 & {\bf 79.33} $\pm$ 0.15 & {\bf 93.84} $\pm$ 0.14 & {\bf 88.09} $\pm$ 0.30 & {\bf 85.17} $\pm$ 0.10 \\ \hline
\end{tabular}
\vspace{-0.5cm}
\end{center}
\caption{Accuracy of the ablation study models with the same space and time complexity, these 4 models enable us to evaluate the effect of the \emph{TransNet} architecture as well as the \emph{TransNet} algorithm separately. Mean and standard error for 3 repetitions are shown.}
\label{table:acc_ablation}
\end{table*}

Our method consists of 2 main components - the \emph{TransNet} architecture as well as the training algorithm Alg.~\ref{alg:1}. To evaluate the accuracy gain of each component we consider two variations:
\begin{itemize}[itemsep=1pt]
    \item \textbf{Architecture only}: in this method we train the multi-headed architecture (2 in this case) by feeding each head the same un-transformed batch (equivalent to a \emph{TransNet} model with the multi-set of $\{id,id\}$ transformations). Prediction is retrieved from a single  head (similar to PT2-CNN).
    \item \textbf{Algorithm only}: in this method we train the base (one headed) model by the same algorithm Alg.~\ref{alg:1}. (This model was also considered in the theoretical part~\ref{sec:single_vs_multi_headed}, termed single headed model.)
\end{itemize}

We compare the two methods above to the "base-CNN" regular model and the complete model "PT2-CNN", see Table~\ref{table:acc_ablation}. We can see that using only one of the components doesn't yield any significant accuracy gain. This suggest that the complete model benefits from both components working together: the training algorithm increases the model kernel's invariance on the one hand, while the multi-heads architecture encourage the model to capture meaningful orientation information on the other hand.

\section{Summary}
We introduced a model inspired by self-supervision, which includes a base CNN model attached to multiple heads, each corresponding to a different transformation from a fixed set of transformations. The self-supervised aspect of the model is crucial, as the chosen transformations must not occur naturally in the data.  When the model is pruned back to match the base CNN, it achieves better test accuracy and improved generalization, which  is attributed to the increased invariance of the model's kernels in the last layer. We observed that excess invariance, while improving generalization, eventually curtails the test accuracy. 

We evaluated our model on various image data-sets, observing that each data-set achieves its own optimal kernel's invariance level, \ie there's no optimal number of heads for all data-sets.  Finally, we introduced an invariance score gauge ($IS$), which measures the level of invariance achieved by the model's kernels. $IS$ may be leveraged to determine the optimal invariance level, as well as potentially function as an independent regularization term.

\section*{Acknowledgements}
This work was supported in part by a grant from the Israel Science Foundation (ISF) and by the Gatsby Charitable Foundations.

{\small
\bibliographystyle{ieee_fullname}
\bibliography{main}
}

%------------------------------------------------------------------------------

\appendix
\section*{Appendix}

\section{Implementation details}
\label{app:imp_details}
We employed the ResNet \cite{he2016deep} architecture, specifically the ResNet18 architecture for all the data-sets except for the ImageNet-200 which was evaluated using the ResNet50 architecture.  It's important to notice that we haven't changed the hyper-parameters used by the regular CNN architecture which \emph{TransNet} is based on. This may strengthen the results as one may fine tune these hyper-parameters to suit best the \emph{TransNet} model.

We used a weight decay of 0.0001 and momentum of 0.9. The model was trained with a batch size of 64 for all the data-sets except for ImageNet-200 where we increased the batch size to 128. We trained the model for 300 epochs, starting with a learning rate of 0.1, divided by 10 at the 150 and 225 epochs, except for the ImageNet-200 model which was trained for 120 epochs, starting with a learning rate of 0.1, divided by 10 at the 40 and 80 epochs. We normalized the images as usual by subtracting the image's mean and dividing by the image's standard deviation (color-wise). 

We employed a mild data augmentation scheme - horizontal flip with probability of 0.5. For the CIFAR data-sets we padded each dimension by 4 pixels and cropped randomly (uniform) a 32$\times$32 patch from the enlarged image \cite{lee2015deeply} while for the ImageNet family data-sets we cropped randomly (uniform) a 224$\times$224 patch from the original image.

In test time, we took the original image for the CIFAR data-sets and a center crop for the ImageNet family data-sets. The prediction of each model is the mean of the model's output on the original image and a horizontally flipped version of it. Note that a horizontal flip occurs naturally in every data-set we use for evaluation and therefore isn't associated with any of the \emph{TransNet} model's heads that we evaluate.

\end{document}